\documentclass{article}

\usepackage[final]{neurips_2024}

\usepackage[utf8]{inputenc}
\usepackage[T1]{fontenc}
\usepackage{url}
\usepackage{booktabs}
\usepackage{amsfonts}
\usepackage{amsmath}
\usepackage{amssymb}
\usepackage{amsthm}
\usepackage{nicefrac}
\usepackage{microtype}
\usepackage{graphicx}
\usepackage{subcaption}
\usepackage{xcolor}
\usepackage{float}
\usepackage{pmboxdraw}
\usepackage{newunicodechar}
\usepackage{hyperref}

\DeclareMathOperator{\ReLU}{ReLU}
\DeclareMathOperator{\tr}{tr}
\DeclareMathOperator{\diag}{diag}

\newtheorem{proposition}{Proposition}

\newunicodechar{└}{\textSFii}
\newunicodechar{├}{\textSFviii}
\newunicodechar{─}{\textSFx}
\newunicodechar{⋮}{\ensuremath{\vdots}}

\title{I Dropped a Neural Net}

\author{%
  Hyunwoo Park \\
  Carnegie Mellon University\\
  \texttt{hp2@andrew.cmu.edu}
}
\begin{document}

\maketitle

\begin{abstract}
A recent Dwarkesh Patel podcast with John Collison and Elon Musk featured an interesting puzzle from Jane Street: \emph{they trained a neural net, shuffled all 96 layers, and asked  to put them back in order}.

Given unlabelled layers of a Residual Network and its training dataset, we recover the
exact ordering of the layers. The problem decomposes into pairing each
block's input and output projections ($48!$ possibilities) and ordering
the reassembled blocks ($48!$ possibilities), for a combined search space
of $(48!)^2 \approx 10^{122}$, which is more than the atoms in the observable universe.
We show that stability conditions during training like dynamic isometry leave the product $W_{\text{out}} W_{\text{in}}$
for correctly paired layers with a negative diagonal structure, allowing us to use
\emph{diagonal dominance ratio} as a signal for pairing. For ordering, we seed-initialize with a rough proxy such as delta-norm or $\|W_{\text{out}}\|_F$  then hill-climb to zero mean squared error.
\end{abstract}
\section{Introduction}
\label{sec:intro}

Consider the following combinatorial puzzle: a 48-block ResNet has
been ``dropped,'' scattering its 97 constituent linear layers (96~block layers
plus one output projection) into an unlabelled collection. Given only these
and 10{,}000 training datapoints, can we recover
the original network?

Deep Residual Networks~\cite{he2016deep} compute a sequence of near-identity
transformations, where each block adds a small perturbation to its input via a
residual branch. Veit et al.~\cite{veit2016residual} showed that
this architecture behaves like an ensemble of relatively shallow networks,
implying that individual blocks can be reordered with controlled
impact on the output. A complementary perspective comes from dynamic
isometry~\cite{saxe2014exact,pennington2017resurrecting,xiao2018dynamical}:
well-trained residual blocks have stable gradients, which constrains the
structure of their weight products. We exploit these properties to \emph{reconstruct} a dropped ResNet.

The problem decomposes into \emph{pairing} (matching each input projection to its output projection) and \emph{ordering} (sequencing the 48 reassembled blocks), yielding a combined search space of $(48!)^2 \approx 10^{122}$, more than the atoms in the observable universe, which makes brute force impossible.

\begin{enumerate}
    \item \textbf{Pairing} (Section~\ref{sec:pairing}): Match each $W_{\text{in}}$
    to its $W_{\text{out}}$ via a diagonal dominance ratio on $W_{\text{out}} W_{\text{in}}$.
    \item \textbf{Ordering} (Section~\ref{sec:ordering}): Recover the block
    sequence via a seed initialization, refined
    by Bradley--Terry ranking, and finalized by MSE bubble repair.
\end{enumerate}

\newpage 
\section{The Puzzle}
\label{sec:puzzle}

\begin{quote}
\emph{Oh no! I dropped an extremely valuable trading model and it fell apart into linear layers! I need to rebuild it before anyone notices, but I can't remember how these pieces go together, or how it was trained.
All I have left are the pieces of the model and some historical data. Can you help me figure out how to put it back together?
Luckily I still have the source code of the layers that the neural network is made of. They look like this:}
\end{quote}

\begin{verbatim}
class Block(nn.Module):
    def __init__(self, in_dim: int, hidden_dim: int):
        super().__init__()
        self.inp = nn.Linear(in_dim, hidden_dim)
        self.activation = nn.ReLU()
        self.out = nn.Linear(hidden_dim, in_dim)

    def forward(self, x):
        residual = x
        x = self.inp(x)
        x = self.activation(x)
        x = self.out(x)
        return residual + x

class LastLayer(nn.Module):
    def __init__(self, in_dim: int, out_dim: int):
        super().__init__()
        self.layer = nn.Linear(in_dim, out_dim)

    def forward(self, x):
        return self.layer(x)
\end{verbatim}
The solution is a permutation of indices $\{0, \ldots, 96\}$.
We are also given :
\begin{verbatim}
historical_data_and_pieces
├── historical_data.csv
└── pieces
    ├── piece_0.pth
    ├── piece_1.pth
    ⋮
    └── piece_96.pth
\end{verbatim}

\noindent The CSV file contains 10{,}000 rows with 48 input features
(\texttt{measurement\_0} through \texttt{measurement\_47}),
the original model's predictions (\texttt{pred}),
and the ground-truth target values (\texttt{true}).

\newpage


\subsection{Architecture}

The network consists of 48 residual blocks followed by a final linear
layer (Figure~\ref{fig:architecture}). Each block computes:
\begin{equation}
    \text{Block}_k(x) = x + r_k(x), \qquad
    r_k(x) := W_{\text{out}}^{(k)} \ReLU\!\big(W_{\text{in}}^{(k)} x + b_{\text{in}}^{(k)}\big) + b_{\text{out}}^{(k)}
    \label{eq:block}
\end{equation}
where $W_{\text{in}}^{(k)} \in \mathbb{R}^{96 \times 48}$,
$W_{\text{out}}^{(k)} \in \mathbb{R}^{48 \times 96}$,
and the \texttt{in\_dim=48}  with \texttt{hidden\_dim=96}.
The full forward pass is:
\begin{equation}
    f_\sigma(x) = W_{\text{last}}\, \text{Block}_{\sigma(48)} \circ \cdots \circ \text{Block}_{\sigma(1)}(x) + b_{\text{last}}
\end{equation}
where $\sigma \in S_{48}$ is the block permutation and
$W_{\text{last}} \in \mathbb{R}^{1 \times 48}$ is the scalar output
projection (\texttt{piece\_85}, identified by its unique shape).

\begin{figure}[H]
    \centering
    \includegraphics[height=0.70\textheight]{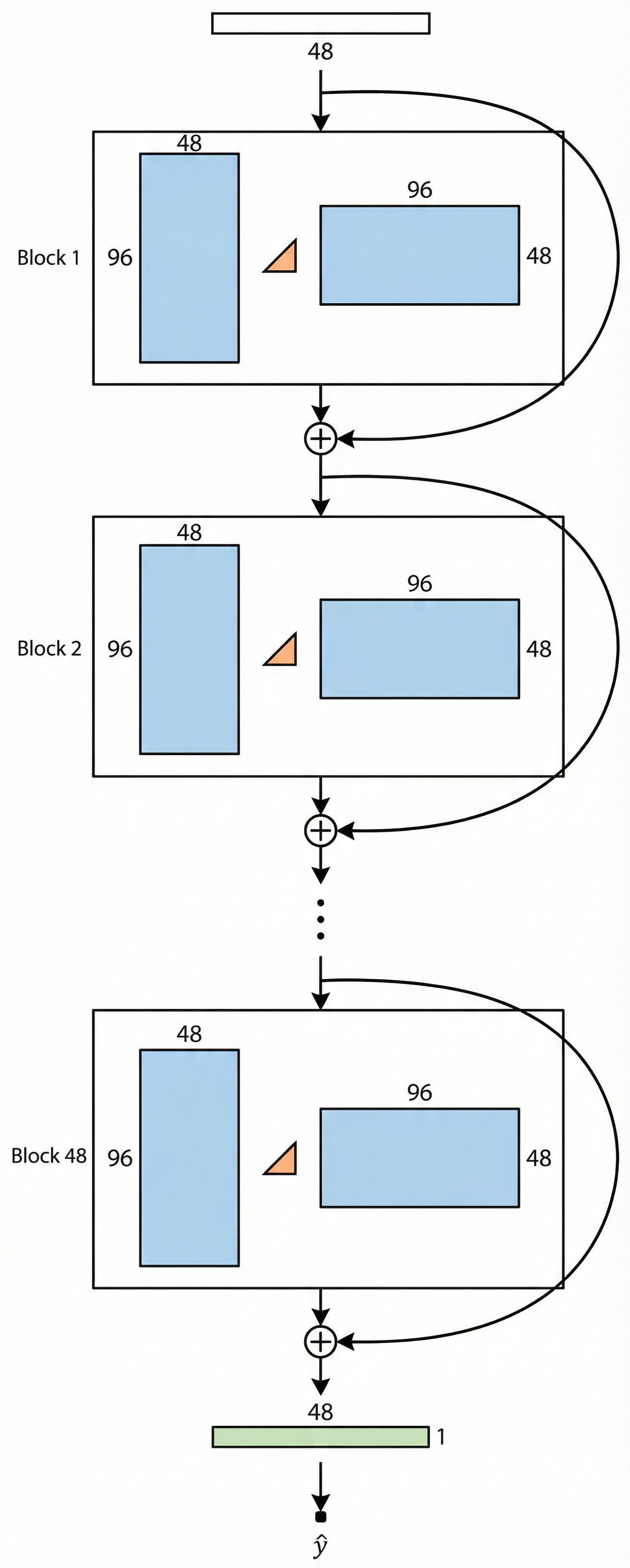}
    \caption{\textbf{Architecture.}
    The 48-block ResNet. Each block applies
    input layer,
    ReLU, output layer,
    then a residual connection. The final
    layer produces the scalar output $\hat{y}$.}
    \label{fig:architecture}
\end{figure}

\subsection{Available Data}
\label{sec:data}

We are given:
\begin{itemize}
    \item 97 weight matrices and bias vectors.
    The shapes are: 48 of $(96, 48)$ (input projections),
    48 of $(48, 96)$ (output projections), and 1 of $(1, 48)$ (last layer).
    \item $N = 10{,}000$ data points $(x_i, \hat{y}_i, y_i)$ where
    $x_i \in \mathbb{R}^{48}$ are input features,
    $\hat{y}_i \in \mathbb{R}$ are the original model's predictions, and
    $y_i \in \mathbb{R}$ are the ground-truth targets.
\end{itemize}

\begin{figure}[ht]
    \centering
    \begin{subfigure}[t]{0.48\textwidth}
        \centering
        \includegraphics[width=\textwidth]{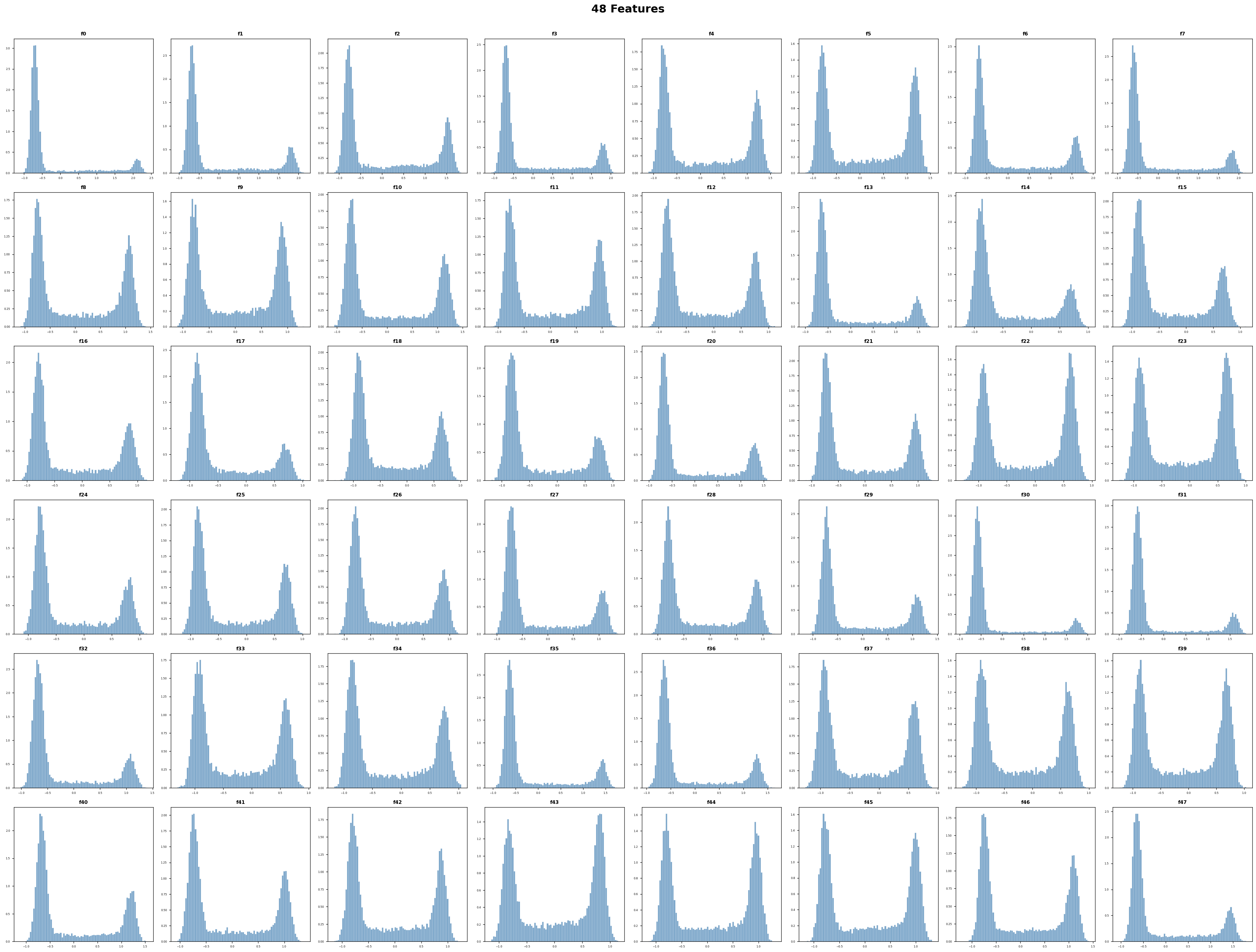}
        \caption{Sorted feature values.}
        \label{fig:sorted_48_features}
    \end{subfigure}
    \begin{subfigure}[t]{0.48\textwidth}
        \centering
        \includegraphics[width=\textwidth]{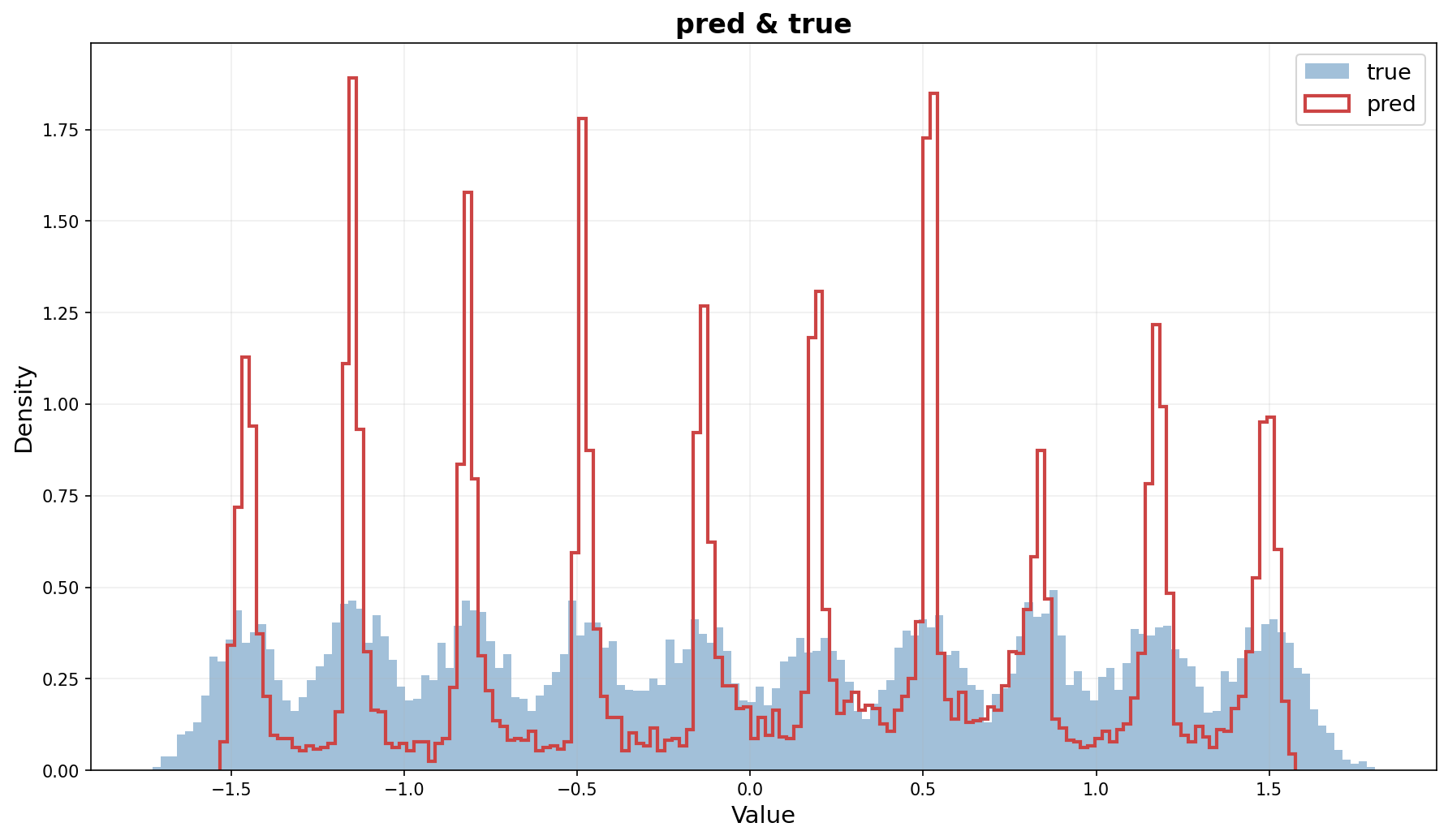}
        \caption{Sorted \texttt{pred} and \texttt{true}.}
        \label{fig:sorted_pred_true}
    \end{subfigure}\hfill
    \caption{\textbf{Dataset overview.} Features and outputs are normalized. }
    \label{fig:data_overview}
\end{figure}

\subsection{Objective}

Given the dataset $\{(x_i,\hat{y}_i)\}_{i=1}^N$ ,
we define the mean-squared error against \texttt{pred}
\begin{equation}
   \mathcal{L}(\sigma)
   \;:=\; \frac{1}{N}\sum_{i=1}^{N}\left(f_\sigma(x_i) - \hat{y}_i\right)^2.
   \label{eq:empirical_mse}
\end{equation}
The correct permutation should have $\mathcal{L}(\sigma)=0$ up to floating-point error.

\includegraphics[width=\textwidth]{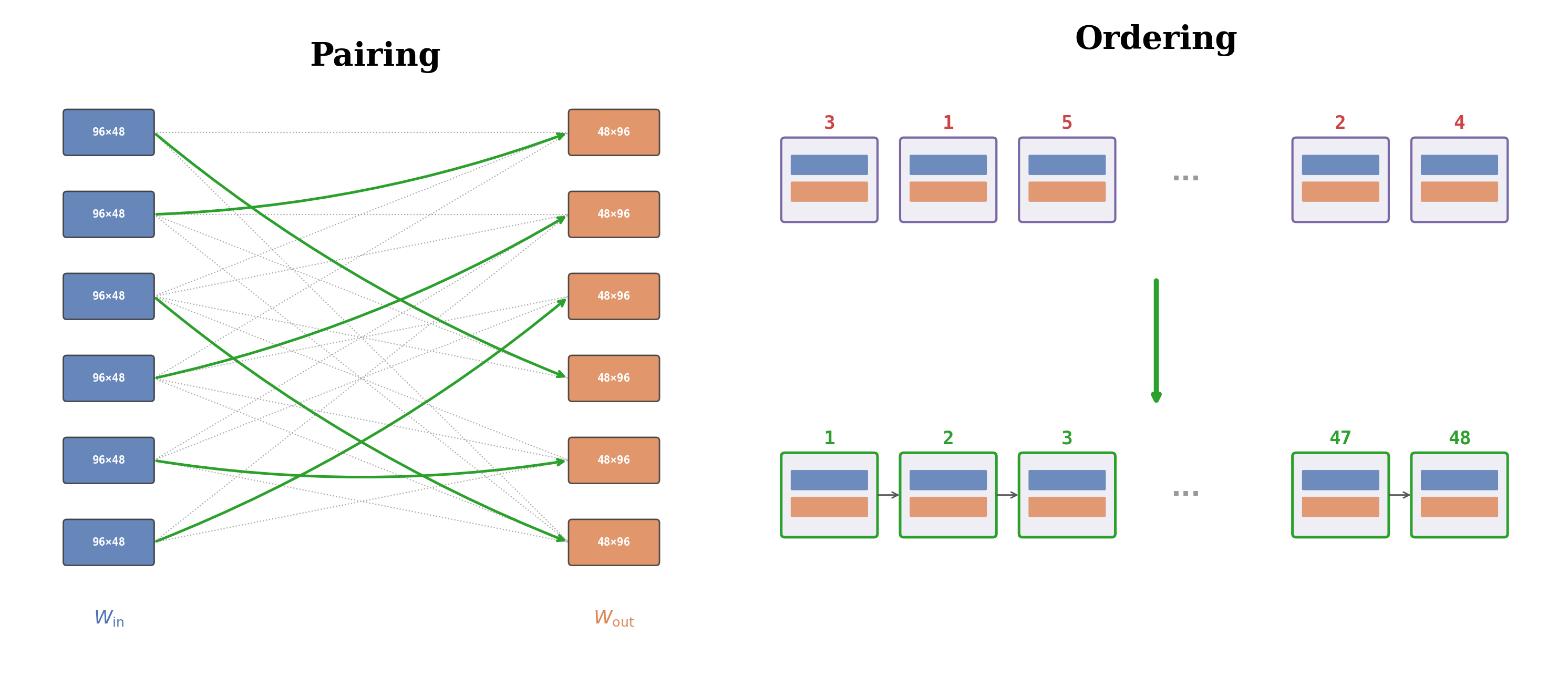}

\begin{enumerate}
    \item \textbf{Pairing:} Pair each $W_{\text{in}} \in \mathbb{R}^{96 \times 48}$
    to its $W_{\text{out}} \in \mathbb{R}^{48 \times 96}$. Bipartite matching with $48!$ possibilities.
    \item \textbf{Ordering:} Order the 48 paired blocks in the correct sequence.
    Permutation over another $48!$ possibilities.
\end{enumerate}

The combined search space of $(48!)^2 \approx 10^{122}$ is more than the atoms in the observable universe, making brute force search impossible.

\newpage 
\section{Pairing}
\label{sec:pairing}

We train a small model with the same architecture on the same data. Upon examining the product $W_{\text{out}}^{(k)} W_{\text{in}}^{(k)} \in \mathbb{R}^{48 \times 48}$ we observe a negative diagonal structure. (Proposition~\ref{prop:jacobian}).
 Which motivates the metric:

\subsection{Diagonal Dominance Ratio}

For each candidate pair of input layer $i$ and output layer $j$, we form
the product $W_{\text{out}}^{(j)} W_{\text{in}}^{(i)}$ and compute
the \textbf{diagonal dominance ratio}:
\begin{equation}
    d(i, j) = \frac{|\tr(W_{\text{out}}^{(j)} W_{\text{in}}^{(i)})|}{\|W_{\text{out}}^{(j)} W_{\text{in}}^{(i)}\|_F}
    \label{eq:diag_ratio}
\end{equation}

For correctly paired blocks, the negative diagonal structure
(Proposition~\ref{prop:jacobian}) contributes to the trace magnitude,
giving $d \in [1.76, 3.28]$.
For incorrect pairings, the product has no structure,
giving $d \in [0.00, 0.58]$.

We compute $d(i,j)$ for all $48 \times 48$ candidate pairs and solve the
optimal bipartite assignment via the Hungarian
algorithm~\cite{kuhn1955hungarian} in $O(n^3)$, recovering all 48 pairs
perfectly. In practice row-wise greedy
argmax is enough; the Hungarian algorithm simply provides the
optimality guarantee.

\begin{figure}[ht]
    \centering
    \begin{subfigure}[t]{0.48\textwidth}
        \includegraphics[width=\textwidth]{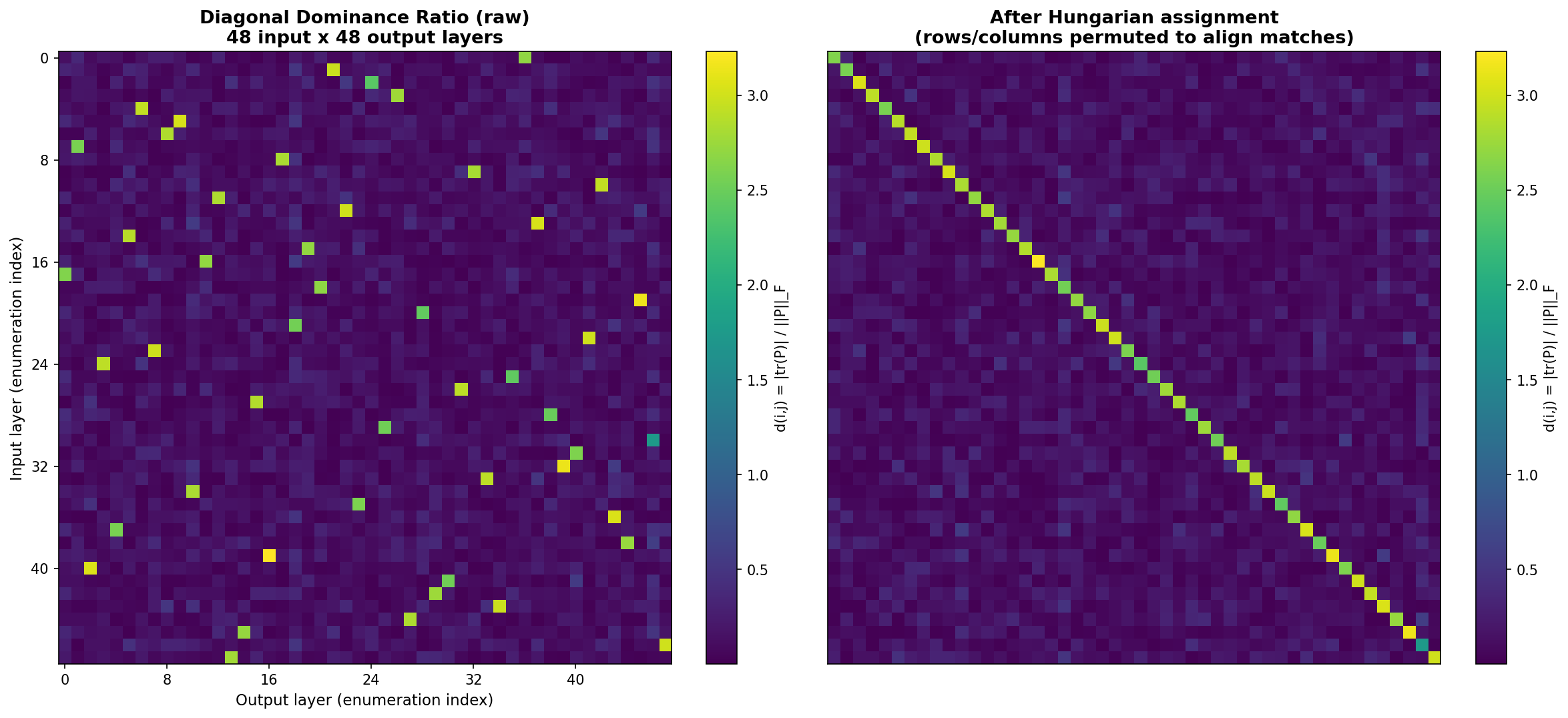}
        \caption{Left: raw $d(i,j)$; one signal per row.
        Right: after matching.}
        \label{fig:pairing_process}
    \end{subfigure}\hfill
    \begin{subfigure}[t]{0.48\textwidth}
        \includegraphics[width=\textwidth]{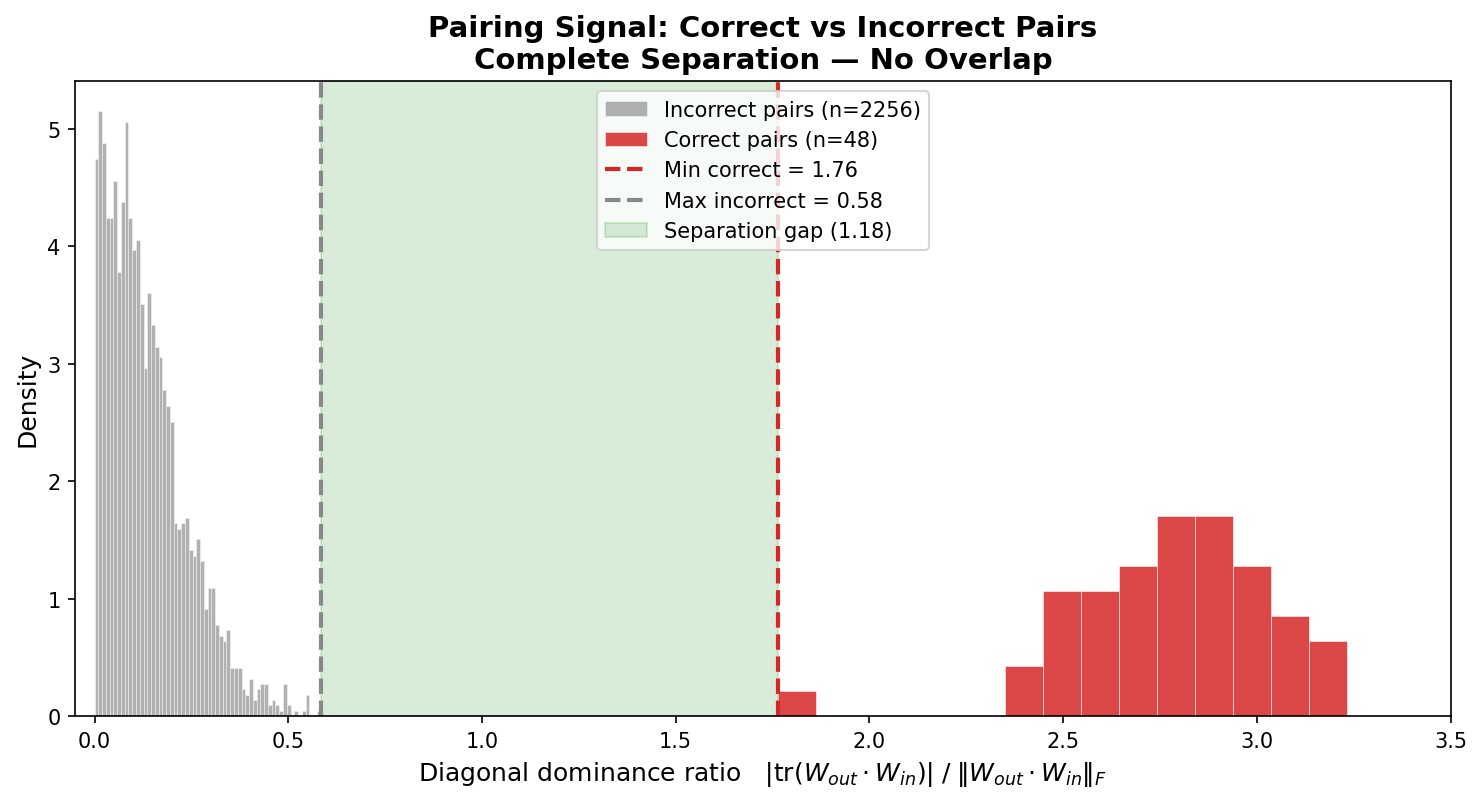}
        \caption{Complete separation: 48 correct pairs (red, $d \geq 1.76$)
        vs.\ 2{,}256 incorrect (gray, $d \leq 0.58$).}
        \label{fig:pairing_histogram}
    \end{subfigure}
    \caption{\textbf{Pairing via diagonal dominance.}
    The diagonal dominance ratio achieves complete separation between
    correct pairs ($d \in [1.76, 3.28]$) and
    incorrect pairs ($d \in [0.00, 0.58]$), with a gap of $1.18$.}
    \label{fig:pairing_combined}
\end{figure}

\begin{figure}[H]
    \centering
    \includegraphics[width=\textwidth]{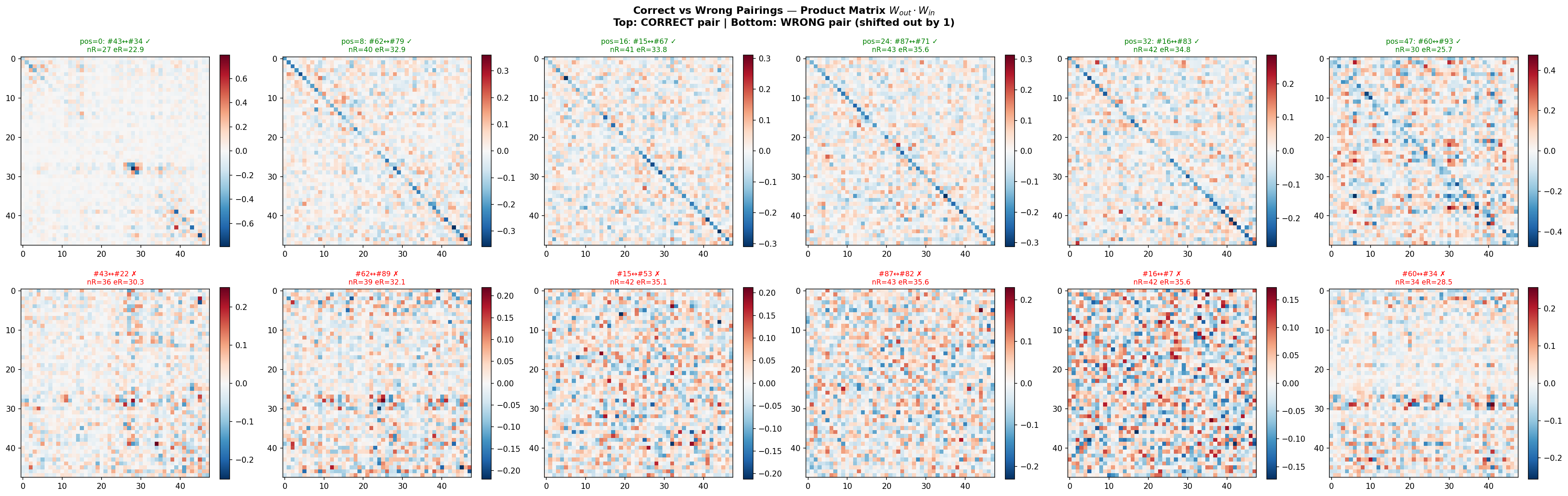}
    \caption{\textbf{Correct vs.\ incorrect pairings.}
    Top: Correctly paired matrices show negative diagonal structure.
    Bottom: Incorrect pairings with no structure.}
    \label{fig:correct_vs_wrong}
\end{figure}

\begin{figure}[H]
    \centering
    \includegraphics[width=\textwidth]{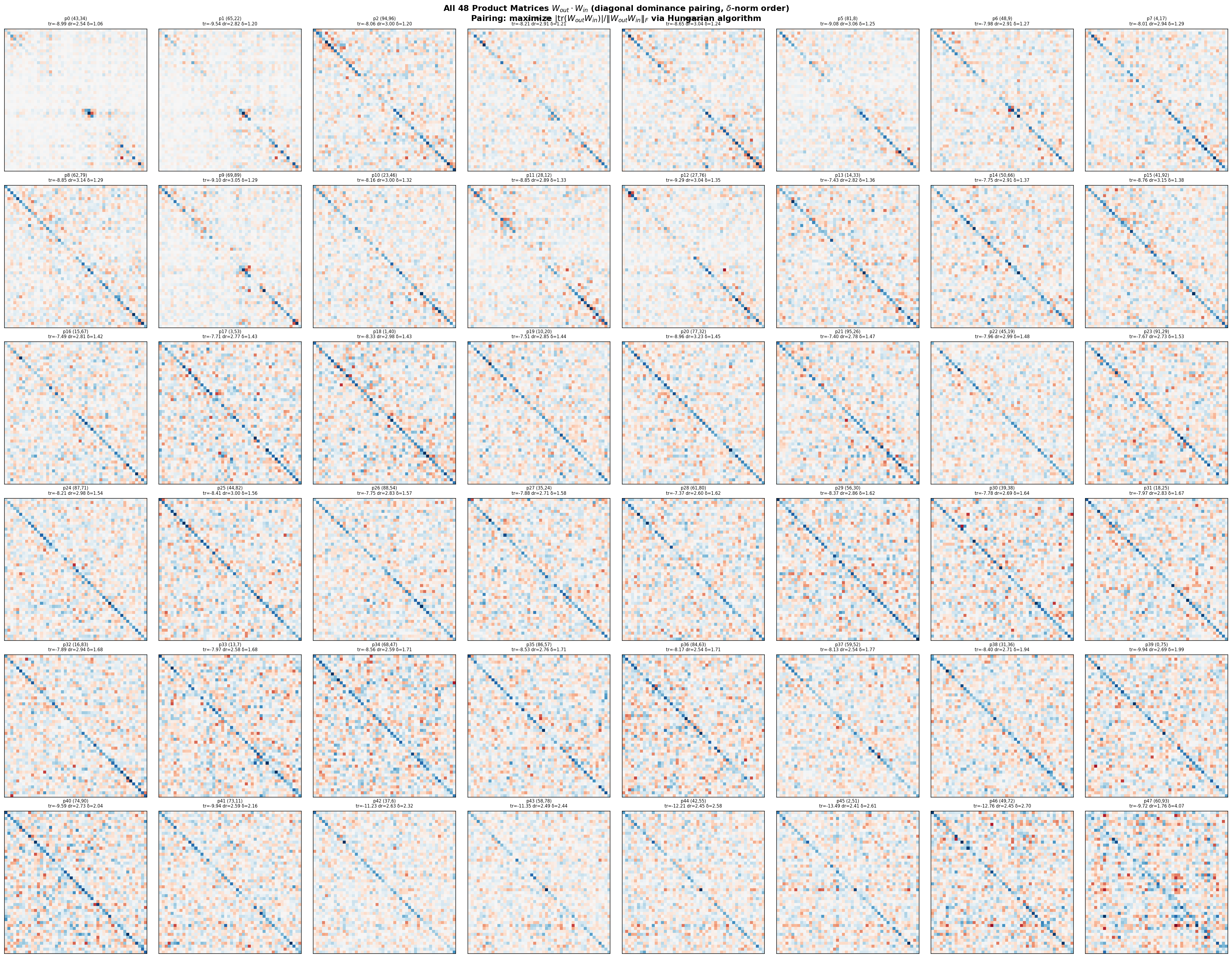}
    \caption{\textbf{All 48 product matrices from the original
     model,} ordered by delta-norm $\delta_k$
    (Section~\ref{sec:seed}). Every block exhibits a negative diagonal
    structure. Earlier blocks (top-left) and later blocks
    (bottom-right). Traces range from $-13.5$ to $-7.4$.
    Compare with Figure~\ref{fig:toy_matrices}.}
    \label{fig:all48}
\end{figure}
\section{Ordering}
\label{sec:ordering}

With all 48 correctly paired blocks, we now need to recover the correct order. We start by seed ordering with a rough proxy that achieves low MSE.

\subsection{Seed Ordering}
\label{sec:seed}

\paragraph{Delta-norm (data-dependent).}
We compute each block's \textbf{delta-norm}, the mean magnitude of its
residual contribution when applied to the raw input:
\begin{equation}
    \delta_k \;:=\; \mathbb{E}_{x \sim \mathcal{D}}\!\left[\|r_k(x)\|_2\right]
    = \mathbb{E}_{x \sim \mathcal{D}}\!\left[
    \Big\|W_{\text{out}}^{(k)}\,\ReLU\!\big(W_{\text{in}}^{(k)} x + b_{\text{in}}^{(k)}\big) + b_{\text{out}}^{(k)}\Big\|_2
    \right]
\end{equation}
Sorting by ascending $\delta_k$ places blocks with smaller perturbations first,
motivated by the empirical observation that residual perturbation magnitude
increases with depth in trained networks, inspired by the
rank-diminishing principle~\cite{feng2022rank}.
This achieves MSE~$= 0.036794$.

\paragraph{$\|W_{\text{out}}\|_F$ (data-free).}
We find that sorting blocks by the Frobenius norm of their output projection
$\|W_{\text{out}}^{(k)}\|_F$ is effective and data-free. Sorting by ascending $\|W_{\text{out}}\|_F$ achieves MSE~$= 0.075851$.

\subsection{Near-Commutativity of Residual Blocks}

We must examine the noise in ordering before we proceed with sorting. Because each block computes $x \mapsto x + r_k(x)$ where $r_k$ is small
relative to $x$, the composition of two blocks is approximately commutative:
\begin{equation}
    \text{Block}_A \circ \text{Block}_B(x) \approx \text{Block}_B \circ \text{Block}_A(x) + O(\|r_A\| \|r_B\|)
\end{equation}
Therefore pairwise ordering preferences are only approximately \textbf{transitive}:
if $A \prec B$ and $B \prec C$, then $A \prec C$ with high probability since the residual perturbations are small relative to the identity path. We can't rely on sorting algorithm alone, therefore we look to methods that are robust to noisy ordering.

\subsection{Bradley--Terry Ranking}
\label{sec:bt}

For all $\binom{48}{2} = 1{,}128$ pairs $(A, B)$, we swap their
positions in the seed ordering and compute the change in MSE:
\begin{equation}
    g_{AB} = \mathcal{L}(\sigma_{\text{swap}(A,B)}) - \mathcal{L}(\sigma_{\text{seed}})
\end{equation}
If $g_{AB} > 0$, then $A$ should precede $B$; if $g_{AB} < 0$, the reverse.
Each comparison is done on one forward pass with just $N_{\text{cmp}} = 2{,}000$ points for efficiency.

Rather than counting wins, we fit a margin-weighted Bradley--Terry
model~\cite{bradley1952rank} to extract strength parameters.
Each pairwise MSE gap $g_{AB}$ is converted to a soft win probability
via a sigmoid with temperature $T$:
\begin{equation}
    p_{A \prec B}  = \frac{1}{1 + \exp(-g_{AB}/T)}
\end{equation}

The latent strengths $\{s_k\}$ are estimated by maximizing the Bradley--Terry
likelihood via the MM algorithm of Hunter~\cite{hunter2004mm}:
\begin{equation}
    s_i^{(t+1)} = \frac{\sum_{j \neq i} w_{ij}}{\sum_{j \neq i} \frac{w_{ij} + w_{ji}}{s_i^{(t)} + s_j^{(t)}}}
\end{equation}
where $w_{ij} = p_{i \prec j}$, converging in $\sim$280 iterations ($T = 0.001$).

\paragraph{Transitivity violations.}
Out of $\binom{48}{3} = 17{,}296$ directed triples, only \textbf{66} (0.38\%) exhibit a
transitivity violation, forming 22 unique 3-cycles. These violations are
concentrated among a small number of blocks, spatially local
(spanning $< 15$ ground-truth positions), and weak
(median margin $1.8 \times 10^{-4}$ vs.\ overall median $0.041$).
Sorting by BT strength yields MSE~$= 0.00299$, a $12\times$ improvement
over the delta-norm seed.

\subsection{Bubble Repair}
\label{sec:repair}

We finally perform local correction by greedy adjacent-swap hill
climbing on MSE: sweep through all 47 adjacent pairs,
swap if it reduces MSE, undo otherwise, and repeat until a full sweep
produces zero swaps. 

\begin{table}[H]
    \centering
    \caption{\textbf{Bubble repair convergence from BT ordering.}
    Converges to MSE~$= 0$ in 5 rounds with 37 swaps.}
    \label{tab:repair}
    \begin{tabular}{cccc}
        \toprule
        \textbf{Round} & \textbf{Swaps} & \textbf{MSE} & \textbf{Cum.\ swaps} \\
        \midrule
        0 (BT init) & -- & 0.002989 & 0 \\
        1 & 21 & 0.001025 & 21 \\
        2 & 7 & 0.000553 & 28 \\
        3 & 5 & 0.000200 & 33 \\
        4 & 3 & 0.000052 & 36 \\
        5 & 1 & \textbf{0.000000} & 37 \\
        \bottomrule
    \end{tabular}
\end{table}

\newpage 
\subsection{Visual Progression}

\begin{figure}[H]
    \centering
    \begin{subfigure}[t]{\textwidth}
        \centering
        \includegraphics[width=0.72\textwidth]{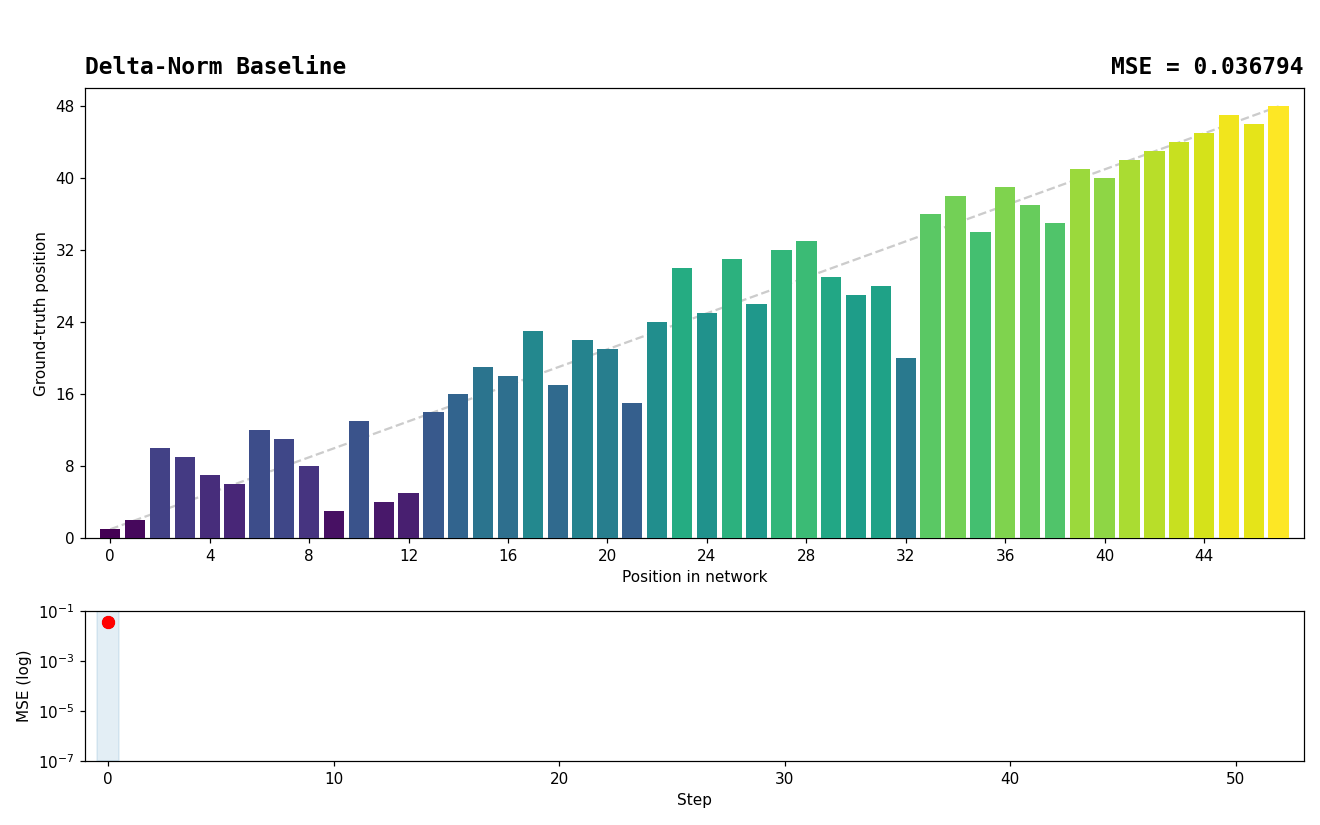}
        \caption{Delta-norm seed.}
    \end{subfigure}
    \vspace{-2pt}
    \begin{subfigure}[t]{\textwidth}
        \centering
        \includegraphics[width=0.72\textwidth]{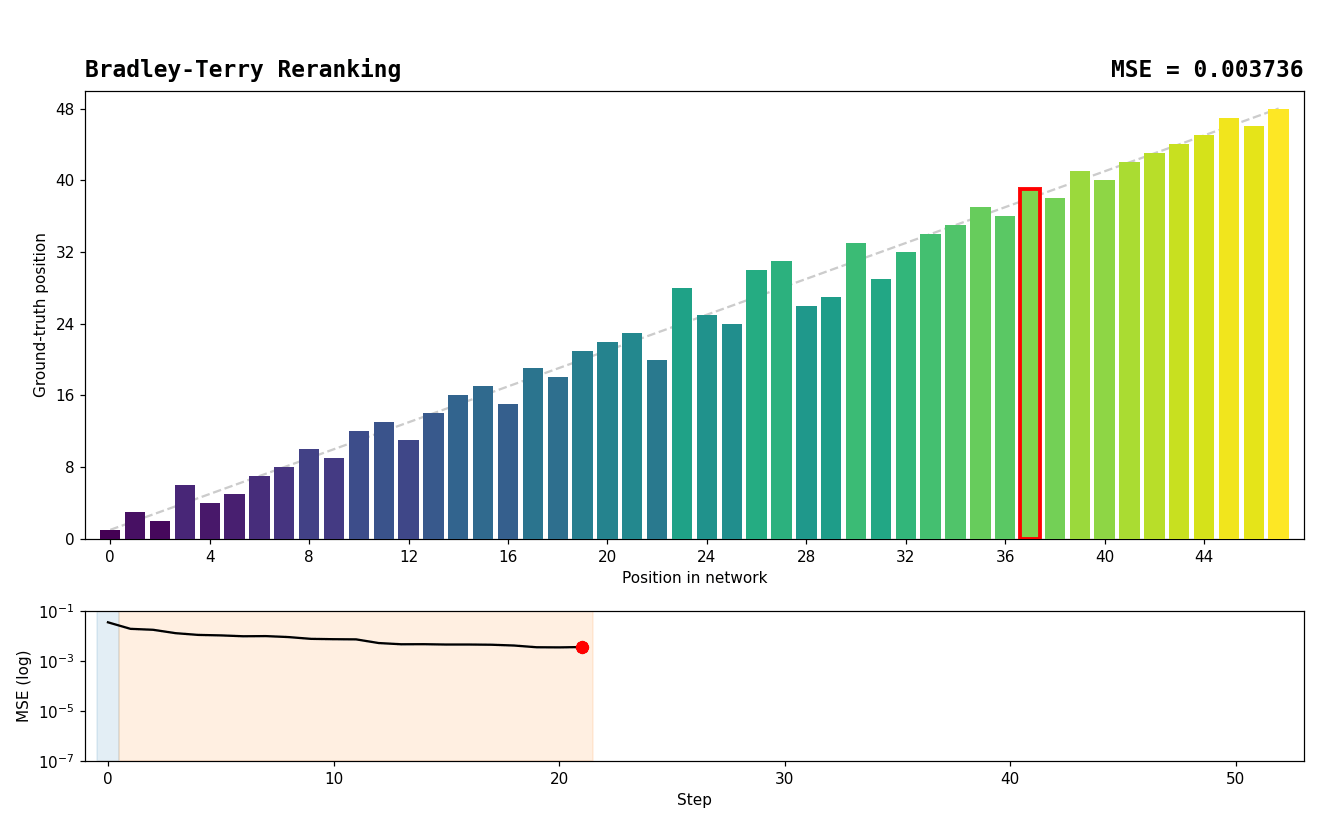}
        \caption{After BT reranking.}
    \end{subfigure}
    \vspace{-2pt}
    \begin{subfigure}[t]{\textwidth}
        \centering
        \includegraphics[width=0.72\textwidth]{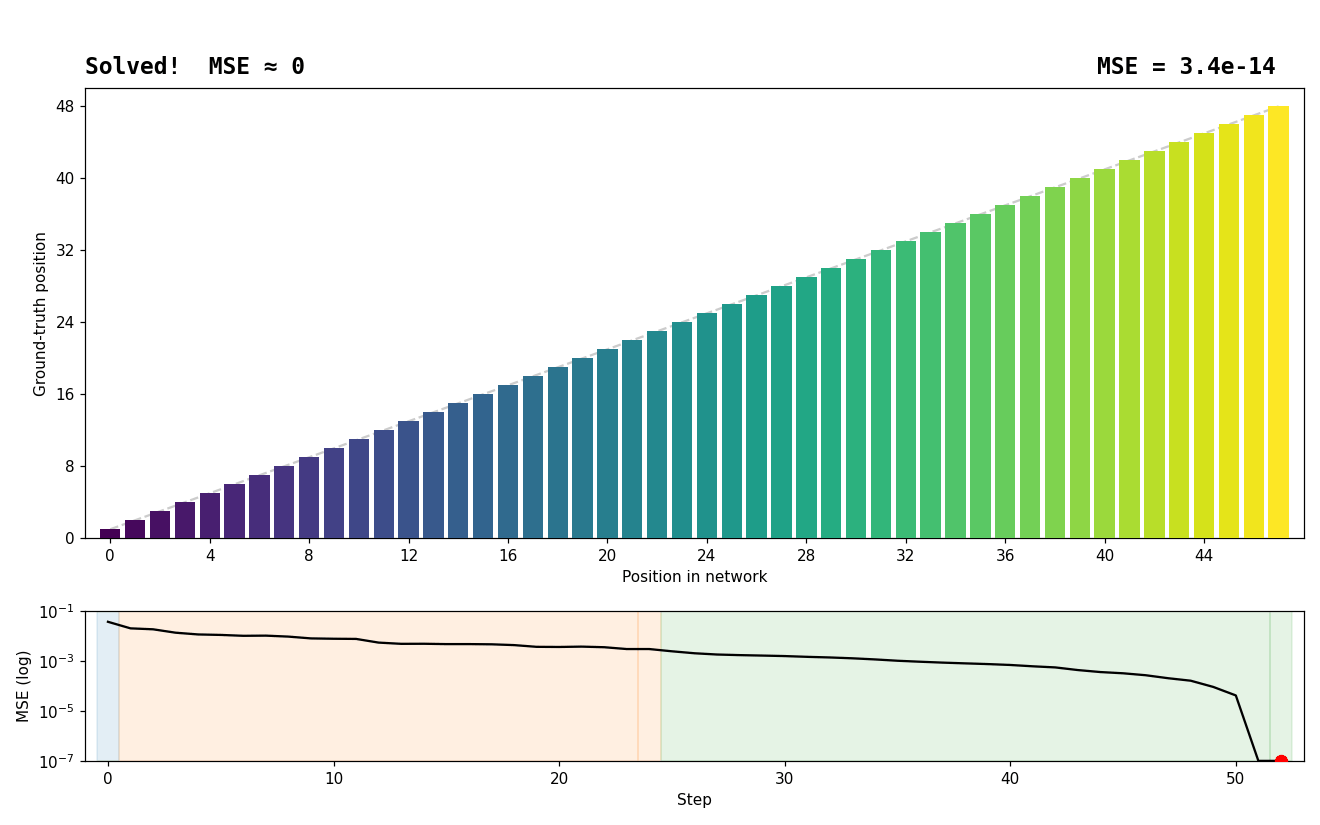}
        \caption{Solved. }
    \end{subfigure}
    \caption{\textbf{Ordering pipeline progression.}
    Bar height = ground-truth position (1--48)}
    \label{fig:sorting_progression}
\end{figure}

\newpage 
\section{Analysis}
\label{sec:analysis}
\subsection{Bradley--Terry Is Not a Necessity}
\label{sec:ablation_hc}

In practice, we find that transitivity violations are negligible and, due to strong initializations, hill-climbing alone could recover the solution. \textbf{Direct MSE hill-climbing from the delta-norm seed converges to MSE $= 0$ on its own}, without pairwise comparisons or Bradley--Terry (BT) ranking. Starting from MSE $= 0.036794$, bubble repair converges in 13 rounds with 122 total swaps (Table~\ref{tab:hc_convergence}).

\begin{table}[H]
    \centering
    \caption{\textbf{Hill-climb from delta-norm seed (no BT).}
    Converges to exact solution in 13 rounds / 122 swaps.}
    \label{tab:hc_convergence}
    \begin{tabular}{cccc}
        \toprule
        \textbf{Round} & \textbf{Swaps} & \textbf{MSE} & \textbf{Cum.\ swaps} \\
        \midrule
        0 (seed) & -- & 0.036794 & 0 \\
        1 & 23 & 0.021690 & 23 \\
        2 & 20 & 0.012919 & 43 \\
        3 & 21 & 0.008559 & 64 \\
        4 & 15 & 0.005873 & 79 \\
        5 & 11 & 0.004097 & 90 \\
        6 & 9  & 0.002745 & 99 \\
        7 & 12 & 0.001012 & 111 \\
        8 & 5  & 0.000380 & 116 \\
        9 & 2  & 0.000247 & 118 \\
        10 & 1 & 0.000182 & 119 \\
        11 & 1 & 0.000118 & 120 \\
        12 & 1 & 0.000052 & 121 \\
        13 & 1 & \textbf{0.000000} & 122 \\
        \bottomrule
    \end{tabular}
\end{table}

\subsection{Hill-climb from $\|W_{\text{out}}\|_F$ seed has better convergence}
\label{sec:ablation_wout}

Starting from a \emph{data-free} seed ordering based on $\|W_{\text{out}}\|_F$,
bubble repair reaches the solution in 6 rounds with 72 swaps
(Table~\ref{tab:hc_wout_convergence}). Despite the higher initial MSE
($0.075851$ vs.\ $0.036794$), convergence is $2\times$ faster in rounds and
requires $1.7\times$ fewer swaps.

\begin{table}[H]
    \centering
    \caption{\textbf{Hill-climb from $\|W_{\text{out}}\|_F$ seed (data-free, no BT).}
    Starting from MSE $= 0.075851$, converges in 6 rounds / 72 swaps.
    Despite higher initial MSE, convergence is $2\times$ faster in rounds
    and $1.7\times$ fewer swaps, and is \emph{data-free}.}
    \label{tab:hc_wout_convergence}
    \begin{tabular}{cccc}
        \toprule
        \textbf{Round} & \textbf{Swaps} & \textbf{MSE} & \textbf{Cum.\ swaps} \\
        \midrule
        0 (seed) & -- & 0.075851 & 0 \\
        1 & 30 & 0.005936 & 30 \\
        2 & 16 & 0.003397 & 46 \\
        3 & 12 & 0.001405 & 58 \\
        4 & 9  & 0.000568 & 67 \\
        5 & 4  & 0.000223 & 71 \\
        6 & 1  & \textbf{0.000000} & 72 \\
        \bottomrule
    \end{tabular}
\end{table}

The BT step mainly improves the \emph{starting point}: from a BT ordering,
bubble repair converges in 5 rounds (Table~\ref{tab:repair}) rather than 13.
However, this comes at the cost of 1{,}128 pairwise swap evaluations (forward
passes). Overall, direct hill-climbing and BT+repair have comparable end-to-end
cost in this setting.

\section{Discussion}

\subsection{What is this data?}
I have spent some time trying to figure out what this data could be and how these features were processed. This turned out to be very difficult, if not impossible. I am very curious to find out (see Figure~\ref{fig:sorted_48_features}). It might just be synthetic.

\subsection{Unreasonable effectiveness of delta norm}
Why does the delta norm work so well as a seed initialization? Are there  better seed orderings? what are their theoretical underpinnings?

\subsection{Convergence}
Why does $\|W_{\text{out}}\|_F$ seed has better convergence despite being a worse initialization compared to the delta norm? Is the geometry of the loss landscape nice? Are there local minima where hill-climbing would get stuck?

\bibliographystyle{plainnat}
\bibliography{references}

@inproceedings{he2016deep,
  title={Deep residual learning for image recognition},
  author={He, Kaiming and Zhang, Xiangyu and Ren, Shaoqing and Sun, Jian},
  booktitle={Proceedings of the IEEE Conference on Computer Vision and Pattern Recognition},
  pages={770--778},
  year={2016}
}

@article{bradley1952rank,
  title={Rank analysis of incomplete block designs: {I}. The method of paired comparisons},
  author={Bradley, Ralph Allan and Terry, Milton E},
  journal={Biometrika},
  volume={39},
  number={3/4},
  pages={324--345},
  year={1952},
  publisher={JSTOR}
}

@article{hunter2004mm,
  title={{MM} algorithms for generalized {B}radley--{T}erry models},
  author={Hunter, David R},
  journal={The Annals of Statistics},
  volume={32},
  number={1},
  pages={384--406},
  year={2004},
  publisher={Institute of Mathematical Statistics}
}

@inproceedings{feng2022rank,
  title={Rank diminishing in deep neural networks},
  author={Feng, Ruili and Zheng, Kecheng and Huang, Yukun and Zhao, Deli and Jordan, Michael and Zha, Zheng-Jun},
  booktitle={Advances in Neural Information Processing Systems},
  volume={35},
  pages={33054--33065},
  year={2022}
}

@article{kuhn1955hungarian,
  title={The {H}ungarian method for the assignment problem},
  author={Kuhn, Harold W},
  journal={Naval Research Logistics Quarterly},
  volume={2},
  number={1-2},
  pages={83--97},
  year={1955},
  publisher={Wiley Online Library}
}

@inproceedings{veit2016residual,
  title={Residual networks behave like ensembles of relatively shallow networks},
  author={Veit, Andreas and Wilber, Michael J and Belongie, Serge},
  booktitle={Advances in Neural Information Processing Systems},
  volume={29},
  year={2016}
}

@article{saxe2014exact,
  title={Exact solutions to the nonlinear dynamics of learning in deep linear networks},
  author={Saxe, Andrew M and McClelland, James L and Ganguli, Surya},
  journal={International Conference on Learning Representations},
  year={2014}
}

@inproceedings{pennington2017resurrecting,
  title={Resurrecting the sigmoid in deep learning through dynamical isometry: theory and practice},
  author={Pennington, Jeffrey and Schoenholz, Samuel S and Ganguli, Surya},
  booktitle={Advances in Neural Information Processing Systems},
  volume={30},
  year={2017}
}

@inproceedings{xiao2018dynamical,
  title={Dynamical isometry and a mean field theory of {CNN}s: How to train 10,000-layer vanilla convolutional neural networks},
  author={Xiao, Lechao and Bahri, Yasaman and Sohl-Dickstein, Jascha and Schoenholz, Samuel S and Pennington, Jeffrey},
  booktitle={International Conference on Machine Learning},
  pages={5393--5402},
  year={2018},
  organization={PMLR}
}

@article{kingma2015adam,
  title={Adam: A method for stochastic optimization},
  author={Kingma, Diederik P and Ba, Jimmy},
  journal={International Conference on Learning Representations},
  year={2015}
}

\newpage

\newpage
\appendix
\section*{Appendix}

\section{Engineer's Induction}
\label{sec:toy}

We train our own 48-block network with identical
architecture on the same data (Adam~\cite{kingma2015adam}, \texttt{lr = 1e-4}, 200 epochs,
MSE~$\approx 0.0006$). We observe a strong negative diagonal structure.
(Figure~\ref{fig:toy_matrices}), matching the original model
(Figure~\ref{fig:all48}).

\begin{figure}[H]
    \centering
    \includegraphics[width=\textwidth]{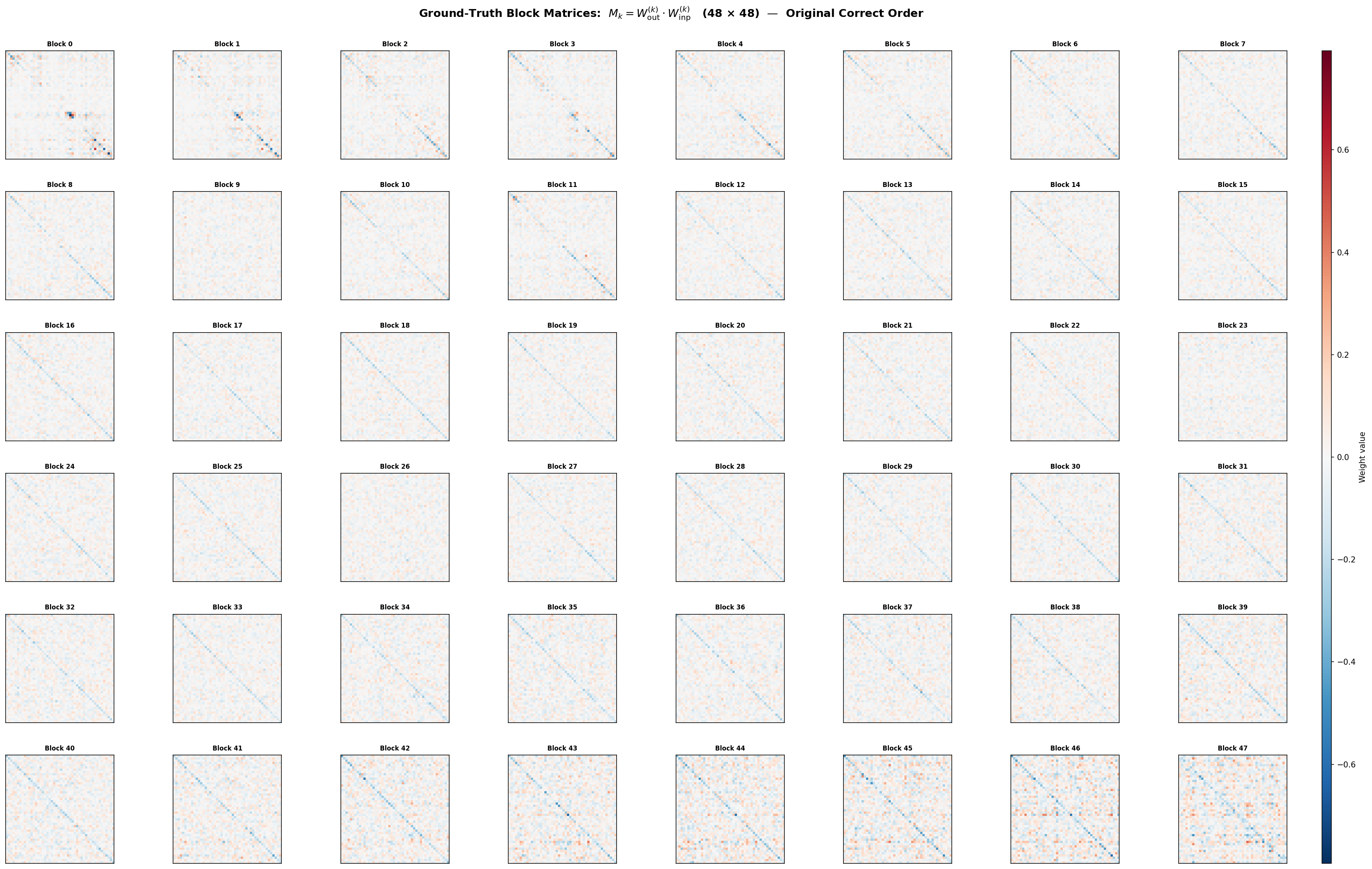}
    \caption{\textbf{Product matrices from a freshly trained model.}
    All 48 matrices $W_{\mathrm{out}}^{(k)} W_{\mathrm{in}}^{(k)}$,
    sorted by Frobenius norm. }
    \label{fig:toy_matrices}
\end{figure}

\section{Why Negative Diagonal Structure?}
\label{app:jacobian}

We derive the negative diagonal structure of
$W_{\mathrm{out}}^{(k)} W_{\mathrm{in}}^{(k)}$ from dynamic
isometry~\cite{saxe2014exact,pennington2017resurrecting,xiao2018dynamical},
then show why it provides a signal for correct pairings.

\subsection{Setup}

Recall each residual block (Eq.~\ref{eq:block}):
$f(x) = x + r(x)$ with
$r(x) = W_{\mathrm{out}}\,\ReLU(W_{\mathrm{in}}\,x + b_{\mathrm{in}}) + b_{\mathrm{out}}$,
where $W_{\mathrm{in}} \in \mathbb{R}^{h \times d}$,
$W_{\mathrm{out}} \in \mathbb{R}^{d \times h}$
($d{=}48$, $h{=}96$).

Let $D(x) := \diag(\mathbf{1}[W_{\mathrm{in}}\,x + b_{\mathrm{in}} > 0]) \in \{0,1\}^{h \times h}$
be the ReLU gating matrix.
The Jacobians are
\begin{equation}
    J_r(x) = W_{\mathrm{out}}\,D(x)\,W_{\mathrm{in}}, \qquad
    J_f(x) = I_d + J_r(x).
\end{equation}

\subsection{Main Result}

\begin{proposition}[Negative diagonal structure]
\label{prop:jacobian}
If the block satisfies dynamic
isometry~\cite{pennington2017resurrecting} in expectation,
$\mathbb{E}_x[J_f(x)^\top J_f(x)] = I_d$,
with non-trivial residual, and the ReLU activation probability is
$\tfrac{1}{2}$ per unit (i.e.\ $\mathbb{E}_x[D(x)] = \tfrac{1}{2}I_h$),
then $\tr(W_{\mathrm{out}}\,W_{\mathrm{in}}) < 0$.
\end{proposition}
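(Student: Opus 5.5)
Expanding the isometry condition gives
\begin{equation}
  \mathbb{E}_x[J_f^\top J_f] = I_d + \mathbb{E}_x[J_r] + \mathbb{E}_x[J_r]^\top + \mathbb{E}_x[J_r^\top J_r] = I_d ,
\end{equation}
and we will take traces of this identity. Since $\tr(A^\top) = \tr(A)$, the identity becomes
\begin{equation}
  2\,\tr\big(\mathbb{E}_x[J_r(x)]\big) = -\,\mathbb{E}_x\big[\|J_r(x)\|_F^2\big].
\end{equation}
The right-hand side is $\le 0$. We read ``non-trivial residual'' as the statement that $J_r(x)\neq 0$ on a set of positive probability. Under that reading the right-hand side is strictly negative, so $\tr(\mathbb{E}_x[J_r(x)]) < 0$.

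Next we use linearity of expectation. Because $W_{\mathrm{in}}$ and $W_{\mathrm{out}}$ do not depend on $x$,
\begin{equation}
  \mathbb{E}_x[J_r(x)] = W_{\mathrm{out}}\,\mathbb{E}_x[D(x)]\,W_{\mathrm{in}} = \tfrac12\, W_{\mathrm{out}} W_{\mathrm{in}}
\end{equation}
by the assumption $\mathbb{E}_x[D(x)] = \tfrac12 I_h$. Hence
\begin{equation}
  \tr(W_{\mathrm{out}} W_{\mathrm{in}}) = -\,\mathbb{E}_x\big[\|J_r(x)\|_F^2\big] < 0 .
\end{equation}

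The algebra is routine; the main obstacle is making the hypotheses precise. First, the expectation hypothesis has to be read as the full matrix statement $\mathbb{E}_x[D(x)] = \tfrac12 I_h$, not merely as a statement about its diagonal. This is automatic here: $D(x)$ is diagonal, so its expectation is diagonal, and the stated diagonal entries of $\tfrac12$ then pin it down. Second, ``non-trivial residual'' must be pinned down as $\mathbb{P}_x(J_r(x)\neq 0) > 0$, for example by asking that some active unit has nonzero rows of $W_{\mathrm{in}}$ and nonzero columns of $W_{\mathrm{out}}$ with nonzero product. Without this condition the argument only gives $\le 0$.

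Finally, to connect with the diagonal dominance ratio of Eq.~\eqref{eq:diag_ratio}, we note a consequence that follows from the same identity. Cauchy--Schwarz and Jensen give
\begin{equation}
  \|W_{\mathrm{out}}W_{\mathrm{in}}\|_F = 2\,\big\|\mathbb{E}_x[J_r]\big\|_F \le 2\sqrt{\mathbb{E}_x\|J_r\|_F^2}.
\end{equation}
Combining this with the trace identity yields
\begin{equation}
  |\tr(W_{\mathrm{out}}W_{\mathrm{in}})| \ge \tfrac14\|W_{\mathrm{out}}W_{\mathrm{in}}\|_F^2 .
\end{equation}
This bound is not needed for the proposition itself, but it quantifies how large the trace is relative to the Frobenius norm, which is the ratio the pairing metric $d(i,j)$ measures.
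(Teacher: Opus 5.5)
Your proof is correct and follows the paper's argument step for step: expand the isometry condition, take traces to get $\mathbb{E}_x[\tr J_r] = -\tfrac12\,\mathbb{E}_x\|J_r\|_F^2$, pull the fixed weights out of the expectation using $\mathbb{E}_x[D(x)]=\tfrac12 I_h$, and conclude $\tr(W_{\mathrm{out}}W_{\mathrm{in}}) = -\mathbb{E}_x\|J_r\|_F^2<0$. Your reading of non-triviality as $\mathbb{P}_x(J_r(x)\neq 0)>0$ is the precise form of what the paper uses for strictness; your single-unit sufficient condition is not quite enough on its own, since the rank-one terms of simultaneously active units can cancel. The extra bound $|\tr(W_{\mathrm{out}}W_{\mathrm{in}})|\ge\tfrac14\|W_{\mathrm{out}}W_{\mathrm{in}}\|_F^2$ is a correct addition that the paper does not have.
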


\begin{proof}
\textbf{Step 1 (Isometry constraint).}
Expand $\mathbb{E}_x[J_f^\top J_f] = I_d$.
Since $J_f = I + J_r$:
\begin{equation}
    \mathbb{E}_x\!\big[(I + J_r)^\top(I + J_r)\big] = I
    \quad\Longrightarrow\quad
    \mathbb{E}_x\!\big[J_r + J_r^\top + J_r^\top J_r\big] = 0.
\end{equation}

\textbf{Step 2 (Trace).}
Take the trace. Using $\tr(J_r^\top) = \tr(J_r)$ and
$\tr(J_r^\top J_r) = \|J_r\|_F^2$:
\begin{equation}
    2\,\mathbb{E}_x\!\big[\tr(J_r)\big] + \mathbb{E}_x\!\big[\|J_r\|_F^2\big] = 0
    \quad\Longrightarrow\quad
    \mathbb{E}_x\!\big[\tr(J_r)\big] = -\tfrac{1}{2}\,\mathbb{E}_x\!\big[\|J_r\|_F^2\big].
    \label{eq:trace_neg}
\end{equation}

\textbf{Step 3 (Evaluate the left-hand side).}
Since $W_{\mathrm{out}}$ and $W_{\mathrm{in}}$ are fixed and only $D(x)$
depends on $x$:
\begin{equation}
    \mathbb{E}_x\!\big[\tr(J_r(x))\big]
    = \tr\!\big(W_{\mathrm{out}}\,\mathbb{E}_x[D(x)]\,W_{\mathrm{in}}\big).
\end{equation}
Each diagonal entry of $D(x)$ is either zero or one. By the activation-probability assumption ($\mathbb{E}[D(x)] = \tfrac{1}{2}I_h$; verified empirically in Figure~\ref{fig:relu_fraction} and Figure~\ref{fig:spectrum}):
\begin{equation}
    \mathbb{E}_x\!\big[\tr(J_r(x))\big]
    = \tfrac{1}{2}\,\tr(W_{\mathrm{out}}\,W_{\mathrm{in}}).
\end{equation}

\textbf{Step 4 (Combine).}
Substituting into Eq.~\eqref{eq:trace_neg}:
\begin{equation}
    \tfrac{1}{2}\,\tr(W_{\mathrm{out}}\,W_{\mathrm{in}})
    = -\tfrac{1}{2}\,\mathbb{E}_x\!\big[\|J_r(x)\|_F^2\big]
    \quad\Longrightarrow\quad
    \boxed{\tr(W_{\mathrm{out}}\,W_{\mathrm{in}})
    = -\,\mathbb{E}_x\!\big[\|J_r(x)\|_F^2\big] < 0.}
\end{equation}
Strict inequality holds because $J_r(x) \neq 0$
(the residual is non-trivial), so
$\mathbb{E}_x[\|J_r(x)\|_F^2] > 0$.
\end{proof}

Empirically, $\tr(W_{\mathrm{out}}^{(k)} W_{\mathrm{in}}^{(k)}) \in [-13.5,\, -7.4]$
across all 48 blocks (Figure~\ref{fig:all48}).

\subsection{Why Pairing Works}
\label{app:pairing_theory}

\paragraph{Correct pair: $-\epsilon\,I$ structure.}
Proposition~\ref{prop:jacobian} guarantees $\tr(M_{ii}) < 0$, so
we can decompose a correctly paired product as
\begin{equation}
    M_{ii}
    := W_{\mathrm{out}}^{(i)}\,W_{\mathrm{in}}^{(i)}
    = -\epsilon\,I_d + E, \qquad
    \epsilon := \tfrac{1}{d}\,|\tr(M_{ii})| > 0,
    \label{eq:eps_decomp}
\end{equation}
where $E := M_{ii} + \epsilon\,I_d$ captures off-diagonal structure and
diagonal variation. By construction $\tr(E) = 0$, so
$\tr(M_{ii}) = -\epsilon\,d$.
For the Frobenius norm, expand using $(M_{ii})_{lm} = -\epsilon\,(I_d)_{lm} + E_{lm}$:
\begin{equation}
    \|M_{ii}\|_F^2
    = \sum_{l,m}\!\big({-}\epsilon\,(I_d)_{lm} + E_{lm}\big)^2
    = \epsilon^2\!\underbrace{\sum_{l,m}\!(I_d)_{lm}^2}_{= \,d}
    \;-\; 2\epsilon\!\underbrace{\sum_l E_{ll}}_{=\,\tr(E)\,=\,0}
    \;+\; \|E\|_F^2
    = \epsilon^2 d + \|E\|_F^2.
\end{equation}
The diagonal dominance ratio for a correct pair is therefore
\begin{equation}
    d(i,i)
    = \frac{|\tr(M_{ii})|}{\|M_{ii}\|_F}
    = \frac{\epsilon\,d}{\sqrt{\epsilon^2 d + \|E\|_F^2}}
    = \frac{\sqrt{d}}{\sqrt{1 + \|E\|_F^2/(\epsilon^2 d)}}\,
    \leq \sqrt{d}.
    \label{eq:d_correct}
\end{equation}
The upper bound $\sqrt{d} \approx 6.93$ is attained only for scaled identity. Empirically the correct-pair
ratios reach $d(i,i) \in [1.76,\, 3.28]$, reflecting
non-negligible off-diagonal energy $\|E\|_F$.

\paragraph{Incorrect pair: random matrix baseline.}
For an incorrect pairing $(i, j)$ with $i \neq j$, the matrices
$W_{\mathrm{out}}^{(j)}$ and $W_{\mathrm{in}}^{(i)}$ come from independently
trained blocks. Model their entries as independent zero-mean random variables
with variances $\sigma_{\mathrm{out}}^2$ and $\sigma_{\mathrm{in}}^2$.
Each entry of $M_{ij} = W_{\mathrm{out}}^{(j)} W_{\mathrm{in}}^{(i)}$ is a
sum of $h$ independent zero-mean products:
\begin{equation}
    (M_{ij})_{lm} = \sum_{k=1}^{h} (W_{\mathrm{out}}^{(j)})_{lk}\,(W_{\mathrm{in}}^{(i)})_{km},
    \qquad
    \mathrm{Var}\!\big((M_{ij})_{lm}\big)
    = h\,\sigma_{\mathrm{out}}^2\,\sigma_{\mathrm{in}}^2.
\end{equation}
The trace sums $d$ diagonal entries. Entry $(M_{ij})_{ll}$ depends on
row $l$ of $W_{\mathrm{out}}^{(j)}$ and column $l$ of $W_{\mathrm{in}}^{(i)}$;
different values of $l$ use disjoint rows and columns, so the diagonal
entries are mutually independent. Therefore
\begin{equation}
    \mathbb{E}\!\big[\tr(M_{ij})\big] = 0,
    \qquad
    \mathrm{sd}\!\big(\tr(M_{ij})\big)
    = \sigma_{\mathrm{out}}\,\sigma_{\mathrm{in}}\,\sqrt{dh}\,.
\end{equation}
The expected squared Frobenius norm sums all $d^2$ entries:
$\mathbb{E}[\|M_{ij}\|_F^2] = d^2 h\,\sigma_{\mathrm{out}}^2\,\sigma_{\mathrm{in}}^2$.
The typical diagonal dominance ratio scales as
\begin{equation}
    \frac{\mathrm{sd}\!\big(\tr(M_{ij})\big)}
         {\sqrt{\mathbb{E}[\|M_{ij}\|_F^2]}}
    = \frac{\sigma_{\mathrm{out}}\,\sigma_{\mathrm{in}}\,\sqrt{dh}}
           {\sigma_{\mathrm{out}}\,\sigma_{\mathrm{in}}\,d\sqrt{h}}
    = \frac{1}{\sqrt{d}}
    \approx 0.14.
    \label{eq:d_random}
\end{equation}

\paragraph{Separation.}
The correct-pair score (Eq.~\ref{eq:d_correct}) is around $\sqrt{d}\approx6.93$ , while the incorrect-pair score
(Eq.~\ref{eq:d_random}) is ${\sim}1/\sqrt{d} \approx 0.14$.
Empirically: $d(i,i) \geq 1.76$ vs.\ $d(i,j) \leq 0.58$
(Figure~\ref{fig:pairing_histogram}).


\section{Additional Figures}
\label{app:figures}

\begin{figure}[ht]
    \centering
    \includegraphics[width=0.7\textwidth]{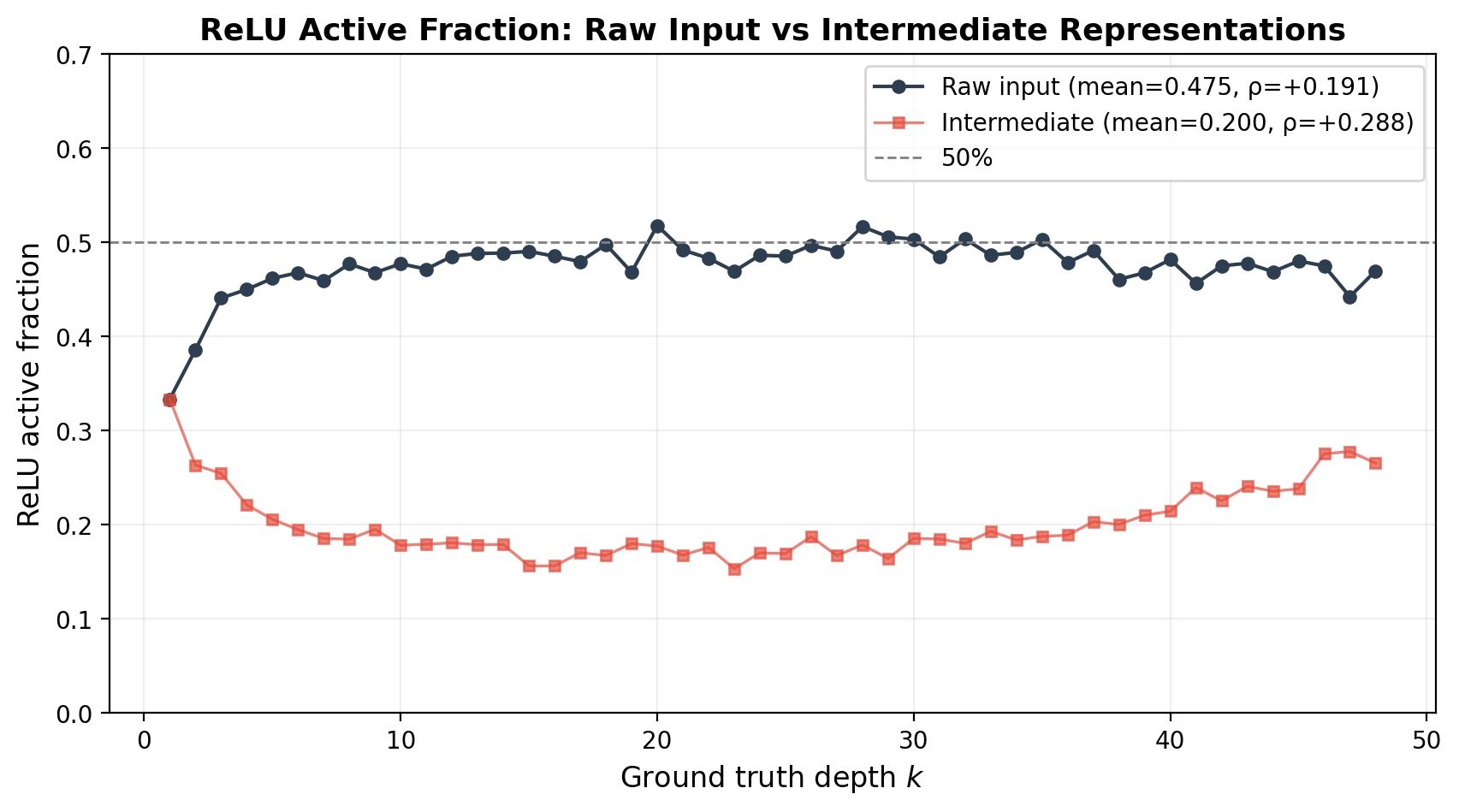}
    \caption{\textbf{ReLU activation fraction across blocks.}
    Per-block ReLU activation probability}
    \label{fig:relu_fraction}
\end{figure}

\begin{figure}[ht]
    \centering
    \includegraphics[width=\textwidth]{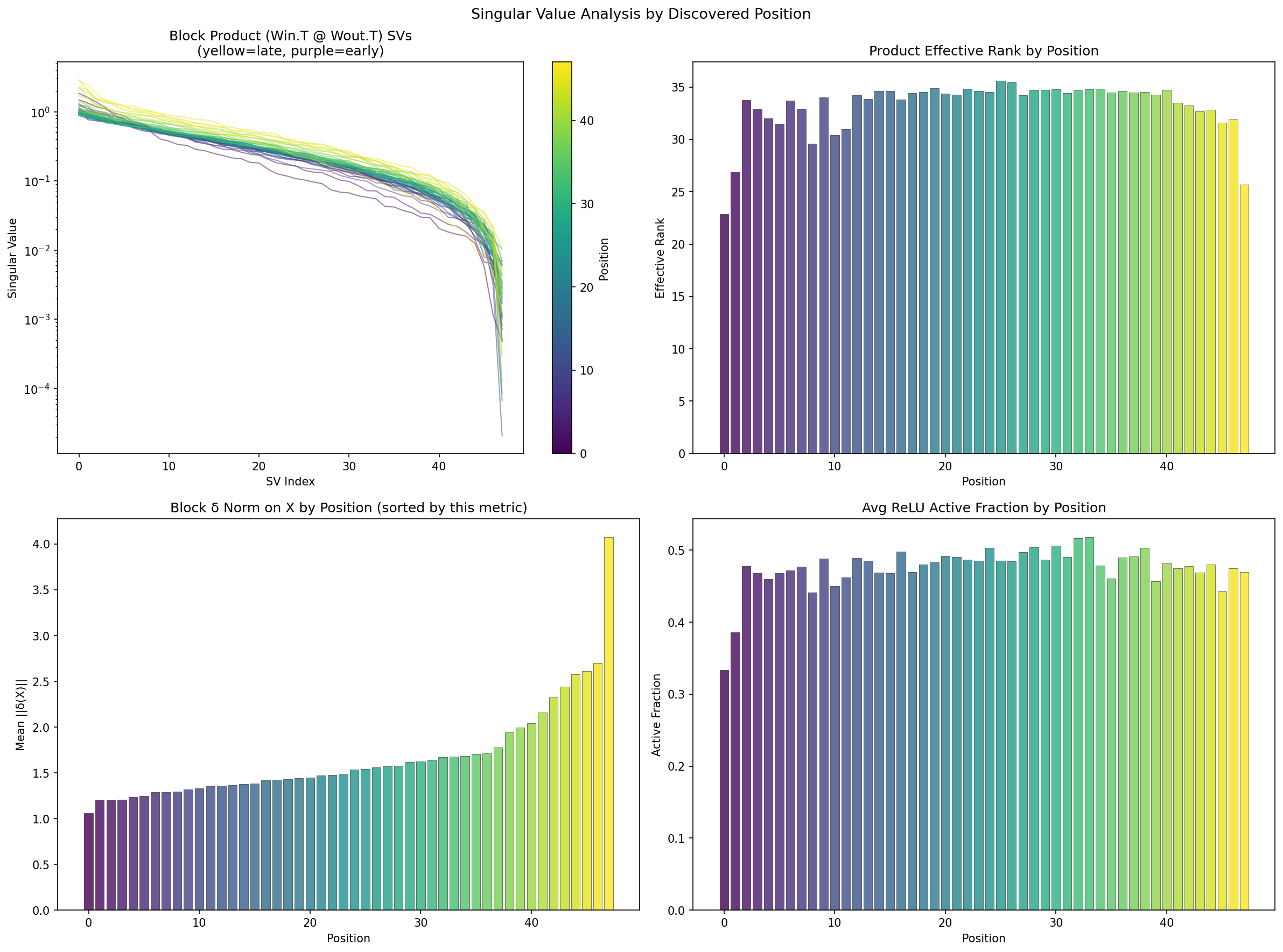}
    \caption{\textbf{Block properties by recovered position.}
    Top-left: singular value spectra of $W_{\mathrm{out}} W_{\mathrm{in}}$,
    colored by position (purple~= early, yellow~= late).
    Top-right: effective rank vs.\ position.
    Bottom-left: delta-norm increases with depth.
    Bottom-right: ReLU active fraction}
    \label{fig:spectrum}
\end{figure}

\end{document}